\DeclareMathOperator*{\argmin}{arg\,min}
\newcommand\independent{\protect\mathpalette{\protect\independenT}{\perp}}
\def\independenT#1#2{\mathrel{\rlap{$#1#2$}\mkern2mu{#1#2}}}
\title{\LARGE \bf
Unsupervised Feature Learning for Manipulation with Contrastive Domain Randomization
}
\author{Carmel Rabinovitz$^{1}$ Niko Grupen$^{2}$ and Aviv Tamar$^{1}$
\thanks{$^{1}$Department of Electrical Engineering,
        Technion, Israel       
        {\tt\small carmelrab@campus.technion.ac.il,
        avivt@technion.ac.il}}%
\thanks{$^{2}$Department of Computer Science, Cornell University,
        NY, USA
        {\tt\small niko@cs.cornell.edu}}%
}
\begin{document}

\maketitle
\thispagestyle{empty}
\pagestyle{empty}

\begin{abstract}
Robotic tasks such as manipulation with visual inputs require image features that capture the physical properties of the scene, e.g., the position and configuration of objects. Recently, it has been suggested to learn such features in an unsupervised manner from simulated, self-supervised, robot interaction; the idea being that high-level physical properties are well captured by modern physical simulators, and their representation from visual inputs may transfer well to the real world. In particular, learning methods based on noise contrastive estimation have shown promising results.
To robustify the simulation-to-real transfer, domain randomization (DR) was suggested for learning features that are invariant to irrelevant visual properties such as textures or lighting. In this work, however, we show that a naive application of DR to unsupervised learning based on contrastive estimation \emph{does not} promote invariance, as the loss function maximizes mutual information between the features and both the relevant and irrelevant visual properties. We propose a simple modification of the contrastive loss to fix this, exploiting the fact that we can control the simulated randomization of visual properties. Our approach learns physical features that are significantly more robust to visual domain variation, as we demonstrate using both rigid and non-rigid objects.
\end{abstract}

\IEEEpeerreviewmaketitle

\section{Introduction}

If a robot is to perform object manipulation tasks, it must process its sensory inputs to extract physical properties that are relevant to the objects and the task. For example, for picking an object, the robot may extract the pose of the object and its size from a camera image. However, for general objects and tasks, defining the relevant properties can be difficult -- what properties should be extracted for manipulating a deformable object such as a rope? And what should be encoded about objects with nontrivial geometries? 

If the task (or task distribution) is known in advance and can be described using a reward function, reinforcement learning (RL) can be used to learn a policy for solving the task~\cite{levine2016end}. By definition, however, the RL agent will only learn features that are relevant for the task it trained on, as specified through the reward function, and the features may not be applicable for other tasks. In addition, for many real world tasks it is difficult to define a reward function~\cite{abbeel2004apprenticeship, christiano2017deep, singh2019end}.

An alternative and promising approach is to learn features in an unsupervised fashion, from self-supervised data of the robot interacting with objects. The idea is that features that are useful for making predictions about the future state of the environment, likely encode relevant object properties, and will therefore be useful for a variety of downstream tasks. A popular method for learning such features is based on contrastive learning: maximizing mutual information between features of past and future observations~\cite{oord2018representation, tian2019contrastive, chen2020simple}.

However, collecting a diverse set of self-supervised robot interactions requires significant resources~\cite{levine2018learning,pinto2016supersizing,dasari2019robonet}. Exploiting the recent advances in physical simulators, Yan et al.~proposed to learn physical features using contrastive learning from \textit{simulated} interaction data \cite{yan2020learning}. Since simulated data is easy to generate, and no reward function needs to be manually defined for unsupervised learning, the approach in \cite{yan2020learning} offers a principled and scalable method for learning general physical features, such as the configuration of non-rigid objects. 

The features learned in simulation can only be accurate up to the differences between simulation and reality -- the \textit{sim-to-real gap}. To robustify learning, a popular trick is domain randomization (DR) -- randomly modify irrelevant visual properties such as lighting and object textures during training, with the hope of learning features that are invariant to these perturbations. Indeed, \cite{yan2020learning} demonstrated successful sim-to-real transfer using DR. Conversely, however, in this work we claim that a straightforward application of DR to contrastive learning is fundamentally flawed: by carefully formulating the problem, we show that the standard CPC loss function aims to maximize information between the features and both relevant and irrelevant visual properties, and therefore does not learn features that are robustly invariant to the irrelevant randomization. Indeed, the experiments in \cite{yan2020learning} were designed with a very small sim-to-real gap, and we demonstrate that when this gap is increased, the quality of the features degrade significantly. Our derivation also prescribes a simple fix -- by applying a different randomization to past and future observations, which can easily be done in simulation, we guarantee that invariant representations are learned.  

We demonstrate that our method, termed Contrastive Domain Randomization (CDR), is able to learn relevant physical features of rigid and non-rigid objects that are highly robust to irrelevant visual perturbations such as background, texture, and lighting. As such, CDR paves the way for learning visual representations of physical properties that are general, robust, and easy to obtain.

\section{Related Work}
Our work is situated between three areas: domain randomization, contrastive learning, and object manipulation.

\subsection{Domain Randomization}
Domain randomization is a class of domain adaptation techniques~\cite{wang2018deep, patel2015visual} that targets the transition of learned models from simulation to the real-world. DR has been successful on vision problems (e.g. object detection, pose estimation \cite{tobin2017domain}) often using a rendering engine to perturb non-essential properties in synthetic scenes~\cite{ren2019domain}. Recent work has extended DR to robotic grasping \cite{tobin2018grasping}, navigation~\cite{sadeghi2016cad2rl}, locomotion~\cite{mordatch2015ensemble}, and, relevantly, deformable object manipulation \cite{matas2018sim}. Further methods alter simulation dynamics in addition to visual features~\cite{antonova2017reinforcement, peng2018sim}, randomizing physical properties, such as friction and latency, to learn robust manipulation policies.
Implicit in DR is an assumption that the environment is composed of relevant and irrelevant properties, which together generate an observation. The goal is to learn a representation that attends to relevant properties (e.g. physics) while ignoring the irrelevant ones. In this work, we make a connection between the domain randomization objective and the idea of learning invariances, as studied in the causal inference literature~\cite{peters2017elements}. Recent work on invariant risk minimization~\cite{arjovsky2019invariant} also aims to learn invariant features. However, our setting controls the simulated domain (the `intervention' in causal inference parlance), enabling more effective representation learning.

\subsection{Contrastive Learning}
Contrastive learning has gained popularity as a method for self-supervised feature learning, and relates to information-theoretic concepts such as mutual information~\cite{oord2018representation, tian2019contrastive, chen2020simple}. Leveraging spatially- or temporally-coherent data (e.g., images or videos), contrastive learning techniques construct "pretext" tasks~\cite{misra2020self} in which a neural network must discriminate between similar and dissimilar input samples~\cite{chen2020big, chen2020improved}. Pretext tasks can involve predicting image patches~\cite{henaff2019data, oord2018representation}, frames in videos~\cite{oord2018representation}, optical flow~\cite{tian2019contrastive}, programs \cite{jain2020contrastive}, and other visual features~\cite{he2020momentum, chen2020improved, chen2020big}. Crucially, contrastive learning can be interpreted as learning a non-linear transformation that is invariant to distortions of the input data \cite{hadsell2006dimensionality}. Our method combines these ideas with domain-randomization to learn self-supervised domain-invariant features.

\subsection{Deformable Object Manipulation}
We focus on deformable object manipulation approaches that combine predictive models with forward planning. 
Recent work has proposed learning non-linear deformation functions to assist feedback control \cite{hu2018three, jia2018learning}. Generative modeling \cite{chen2016infogan, finn2016unsupervised} has enabled neural networks to learn richer dynamics that can be used by model-predictive control (MPC) for manipulation~\cite{ebert2018visual, kurutach2018learning}. Contrastive learning methods, which represent spatial/temporal patterns in the data as latent vectors, are particularly useful for model-based planning \cite{wang2019learning, nair2017combining}. For example, recent methods applied MPC for cloth and rope manipulation, using a forward model learned using contrastive learning~\cite{ding2020mutual, yan2020learning}. Our work extends these approaches to planning that is \textit{invariant} to irrelevant visual properties.

\section{Background}

Our work builds on contrastive learning and domain randomization. To set the stage for our development, we formalize both ideas here under a unified notation.

\subsection{Contrastive Learning} \label{sub: Contrastive Learning}

Contrastive Learning methods learn compact representations of high dimensional observations such as images, video, or audio. Unlike supervised learning, where a parametric model $f$ is trained to maximize the similarity between a prediction $\hat{z}$ and a known label $y$, contrastive learning trains an encoder $f_\theta$, parametrized by $\theta$, to map observations $o$ into a latent representation $\hat{z} = f_\theta(o)$, and makes predictions directly in the latent space. For example, Contrastive Predictive Coding (CPC) \cite{oord2018representation} introduces a self-supervised instance discrimination task in which $f_\theta$ is used to predict a single positive label $y$ amongst $N-1$ contrastive labels $y_j$. This objective, known as InfoNCE loss, simultaneously maximizes the similarity $h(\hat{z}, y)$ between an observation based prediction $\hat{z}$ and the positive label $y$ while minimizing the similarity $h(\hat{z}, y_j)$ between $\hat{z}$ and contrastive labels $y_j$. As a self-supervised task, positive labels $y$ are computed from the observation itself, while contrastive labels $y_j$ are computed from different observations in the data. InfoNCE frames the discrimination task as a standard classification problem, through the cross-entropy loss:

\begin{equation}
    \mathcal{L_{\text{INCE}}^N} =  \mathbb{E}_o \mathbb{E}_{y|o} \left[-\log \frac{h(\hat{z},y)}{\sum_{j=1}^{N} h(\hat{z},y_j )}\right], 
    \label{CPC}
\end{equation}

\noindent where $N$ represents the total number of positive and negative samples. Prior work showed that minimizing \eqref{CPC} is equivalent to maximizing the mutual information (MI) between the observation space $o$ and the latent space $z$ according to the following inequality \cite{oord2018representation, tian2019contrastive}: $ I(o, \hat{z}) \geq \log(N) - \mathcal{L}_{\text{INCE}}^N$. Note that increasing the number of contrastive samples in $N$ improves the lower bound on mutual information, and in turn, the learnt representations.

Typically, the InfoNCE loss is minimized using stochastic gradient descent, where a batch contains random observations from the data, and the loss for a batch of size $N$ is:
\begin{equation*}
    \sum_{k=1}^{N} -\log  \frac{h(\hat{z}_i,y_i)}{\sum_{j = 1}^N h(\hat{z}_i,y_j))}.
\end{equation*}
That is, we set the constrastive labels for an observation to be the labels of other observations in the batch.

Contrastive samples often leverage spatial, temporal, or semantic consistency in the input data. Given image data $o$, for example, it is possible to generate a prediction $\hat{z} = f_\theta(\tau'(o))$ and positive label $y = f_\theta(\tau(o))$ by applying separate visual transformations ($\tau'$ and $\tau$, respectively) to a single image $o$. Contrastive labels can then be generated by applying transformations to different images $y_j = f_\theta(\tau(o_j))$ \cite{chen2020simple, chen2020big}. For sequential data, the temporal dimension of the input serves as a useful signal with which to generate contrastive labels. CPC follows this approach, defining the instance discrimination task over frames in a video~\cite{oord2018representation}. Specifically, from the latent representation $z_t$ of the image at time $t$, $o_t$, CPC predicts the latent representation of a frame $k$ steps in the future $\hat{z}_{t+k}$, using the representation of the true frame $z_{t+k}$ as a positive label, and representations of frames from random time-steps (or different videos altogether) as contrastive labels $z_{j \neq t+k}$. In doing so, CPC simultaneously maximizes the similarity $h(\hat{z}_{t+k}, z_{t+k})$ between the predicted and true future frame representations and minimizes the similarity $h(\hat{z}_{t+k}, z_{j \neq t+k})$ between the prediction and each of the contrastive sample representations. Key to this approach is the joint optimization of an encoder $z_t = f_\theta(o_t)$ and an auto-regressive model $\hat{z}_{t+k} = g_\phi(z_{\leq t})$, parametrized by $\phi$, that at time $t$, produces a latent prediction $\hat{z}_{t+k}$, given a summary $z_{\leq t}$ of observations up to time $t$. Our method similarly exploits the temporal structure of video data for contrastive sampling.

Several similarity metrics have been proposed in the literature. A common choice is the weighted dot-product $\exp(z_i^T W z_j)$ or $\sigma(z_i^T W z_j)$ \cite{chen2020simple, chen2020big}. The cosine distance and L2 distance have also proven effective~\cite{oord2018representation, yan2020learning, schneider2019wav2vec}. 

\subsection{Domain Randomization}\label{ssec:DR}
Domain randomization is a popular method of simulation-to-real transfer for neural networks \cite{sadeghi2016cad2rl,tobin2017domain}. By exposing the network to extensive variations of scene parameters that are irrelevant for decision making during training (e.g. lighting conditions and textures), the goal of DR is to learn robust features that will transfer well to the real world. Here, we cast DR under a probabilistic formulation that will serve our subsequent development.

We assume that an observation (either simulated or real) $o(x,e)$ is generated from two independent random variables $x$ and $e$, representing relevant and irrelevant domain properties, respectively. We further assume access to the observation generation process---i.e. given $x$ and $e$, we can generate $o(x,e)$. For example, $x$ can represent the pose of an object, $e$ its texture, and $o(x,e)$ is generated by a rendering engine. Under these assumptions, DR can be framed as a supervised learning problem over the following loss:
$
    \mathcal{L_{\text{DR}}}(l) = \mathbb{E}_e \mathbb{E}_x \mathbb{E}_{y|x,e} \left[l(o(x, e), y)\right],
$
where $y$ is a label for the observation $o(x,e)$, sampled from $p(y|x,e)$, and $l$ is some supervised learning loss function, such as regression or classification. If $y$ is independent of $e$, optimizing over this loss encourages the network to learn a prediction that is agnostic to $e$. 

\section{Contrastive Domain Randomization (CDR)}
In this section, we describe our proposed framework for learning domain-invariant representations: Contrastive Domain Randomization (CDR). We begin by presenting a formalism for incorporating DR into the contrastive learning framework. We then show that a straightforward combination of CPC with DR does not learn domain-invariant representations. Finally, we introduce our method for learning contrastive predictive models and show that, under mild assumptions, the learned representations of our model are in fact domain-invariant.

\begin{figure}[htbp]
    \centerline{\includegraphics[scale=0.22]{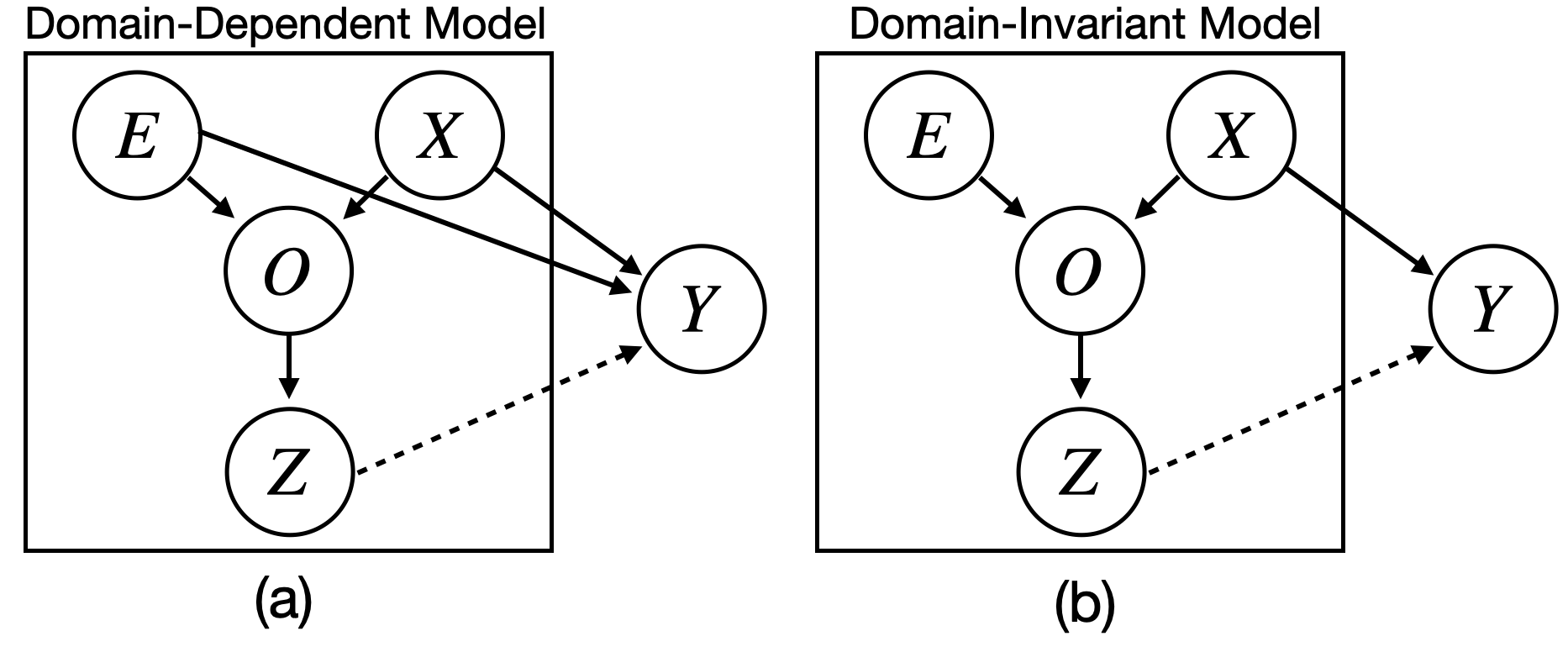}}
    \caption{A causal model that produces environment observations $o(x,e)$, encoding $z$ and labels $y$. (a) applying Naive DR to InfoNCE process where labels $y$ depend on both relevant $x$ and irrelevant environmental properties $e$. (b) Contrastive DR, in which $y \independent e$ by randomly replacing the environmental properties $e$ that generates $y$ to $e'$.}
    \vspace{-1em}
    \label{fig_causal_models}
\end{figure}

\subsection{Problem Formulation}
Recall that observations $o(x, e)$ are generated from two independent variables---relevant properties $x$ and irrelevant environmental properties $e$---and $z = f_\theta(o(x,e))$ is an encoded representation of $o(x,e)$. As in DR, we assume access to the observation generation process. 
Let $y(x,e)$ be a random variable we wish to predict from $o(x,e)$, such as the outcome of an action $a_t$ or an encoding of future observations.
This framework supports two types of causal models: (i) domain-dependent models in which $y$ is influenced by both $e$ and $x$ (Figure \ref{fig_causal_models}a); (ii) domain-independent models in which $x$ predicts $y$, regardless of interventions over $e$ (Figure \ref{fig_causal_models}b). In the next section, we show that straightforwardly applying domain randomization to the CPC objective results in domain-dependent representations.

\subsection{Naive CPC does not Learn Domain-Invariant Features}
A simple application of DR to CPC is to domain-randomize each sample in our data, and apply CPC on this data~\cite{yan2020learning}. The corresponding loss for a batch of size $N$ can be written as:
\begin{equation}
    \sum_{i=1}^{N} -\log  \frac{h(\hat{z}(x_i,e_i),y(x_i,e_i))}{\sum_{j = 1}^N h(\hat{z}(x_i,e_i),y(x_j,e_j))},
    \label{NAIVE_CPC_DR}
\end{equation}
where $x_i,e_i$ correspond to random observations $o(x_i,e_i)$ in the data, and we denote by $\hat{z}(x_i,e_i)$ and $y(x_i,e_i)$ the prediction and label that correspond to the observation.

Though Eq.~\eqref{CPC} suggests that CPC maximizes the mutual information between the representation $z$ and the observation $o$, when inspecting Eq.~\eqref{NAIVE_CPC_DR}, we note that there is nothing in the loss function preventing both $y$ and $z$ from becoming dependent on $e$. That is, Eq.~\eqref{NAIVE_CPC_DR} may learn to maximize mutual information between $e$ and $z$, effectively learning features that encode spurious properties of the random domain.
In fact, depending on the task, $e$ might serve well in distinguishing positive labels from contrastive labels!
Indeed, we empirically show that this naive CPC formulation learns representations that depend on the random domain.

An alternative approach to Eq.~\eqref{NAIVE_CPC_DR} is to contrast between different observations in the data that share the same domain $e$. This will prevent CPC from learning to discriminate between labels based on $e$. The corresponding 
loss for a batch is:
\begin{equation}\label{eq:CPC_DR_2}
    \sum_{i=1}^{N} -\log  \frac{h(\hat{z}(x_i,e),y(x_i,e))}{\sum_{j = 1}^N h(\hat{z}(x_i,e),y(x_j,e))},
\end{equation}
where $x_i,e$ correspond to random observations from the data that share the same domain. While this approach is appealing, it suffers from several technical problems. The first is that SGD based optimization tends to work well when samples in a batch are not correlated. The second is that this method requires generating at least $N$ different observations for each combination of domain properties $e$, which becomes prohibitively large when we want high variation in $e$.

\subsection{Domain-Invariant CPC}
We propose a novel contrastive loss function that learns domain-invariant representations while preserving high sample efficiency. Our method exploits domain randomization to remove the dependence between predictive variables $y$ and spurious domain features $e$.

First, we sample $\{x,e,y\}$ from the joint distribution $p(e)p(x)p(y|x,e)$. Using the observation generation process, we then generate both positive $D = \{o(x,e),y\}$ and intervened $D' = \{o',y'\}$ datasets; where $o' = o(x,e')$ and $y'=y(x, e')$, and $e'$ is randomly sampled. Our key observation is that since $y'$ is sampled from $p(y'|x,e')$, we ensure that $y'$ is independent of $e$, namely, $y' \independent e$. 
We next introduce the Contrastive Domain Randomization loss:
\begin{equation}
    \sum_{i=1}^{N} -\log  \frac{h(\hat{z}(x_i,e_i),y'(x_i,e'_i))}{\sum_{j = 1}^N h(\hat{z}(x_i,e_i),y'(x_j,e'_j))},
    \label{CDR_batch}
\end{equation}
\noindent which is optimized using both $D$ and $D'$. Note that, in contrast to naive CPC \eqref{NAIVE_CPC_DR}, CDR does not induce learning representations that depend on $e$. In comparison to Eq.~\ref{eq:CPC_DR_2}, however, we can sample from different domains in the same batch, leading to improved efficiency. 

\subsection{Domain-Invariance Guarantees}

We now show that CDR indeed learns domain invariant representations. We focus on neural-network encoders, and specifically assume that the encoder can be written as:
$
    f(x,e) = \sigma(W_x^T \phi_x(x) + W_e^T \phi_e(e) + b),
$
that is, the encoder has separable features for $x$ and $e$. In this case, the following proposition shows that the CDR loss will learn to ignore $e$. \newline

\newtheorem{proposition}{Proposition}
\begin{proposition}
Let $x$ and $e$ be independent random variables defined over spaces $X$ and $E$, respectively. Let $y \in Y$ be a random variable that satisfies $y \independent e$. Consider a family of functions $f$, parameterized by $W_x, W_e$, and $b$, that satisfy:
\begin{equation}
    f(x,e) = \sigma(W_x^T \phi_x(x) + W_e^T \phi_e(e) + b),
    \label{separable_function}
\end{equation}
\noindent where $\phi_x(x)$ and $\phi_e(e)$ are functions that depend solely on $x$ and $e$, respectively, and $\sigma$ is some activation function. Given the following loss:
\begin{equation}
    \mathcal{L}(f) = \mathbb{E}_e \mathbb{E}_x \mathbb{E}_{y|x} \left[l(f(x, e), y)\right],
    \label{Domain_Invariant_loss}
\end{equation}
\noindent an optimal set of parameters is achieved when $\phi_e(e)$ is nullified (i.e. $W_e = 0$).
\label{proposition_1}
\end{proposition}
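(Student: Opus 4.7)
The plan is to exploit the independence $y \independent e$ together with the separable form of $f$ via a Jensen-type inequality. First I would use the independences $e \independent x$ and $y \independent e$ to commute the order of expectations and write $\mathcal{L}(f) = \mathbb{E}_x \mathbb{E}_{y\mid x} \mathbb{E}_e[l(\sigma(u(x) + v(e)), y)]$, where I introduce shorthand $u(x) := W_x^T \phi_x(x) + b$ and $v(e) := W_e^T \phi_e(e)$. The point of this rewriting is that $e$ enters only through the additive term $v(e)$ inside the activation, while $y$ has no $e$-dependence at all.

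Next I would apply Jensen's inequality to the inner expectation over $e$. Under the mild assumption that the composite scalar map $u \mapsto l(\sigma(u), y)$ is convex (which holds for the standard pairings used in the paper, e.g.\ squared loss with an affine $\sigma$, or cross-entropy with sigmoid/softmax readouts as in InfoNCE), Jensen gives $\mathbb{E}_e\!\left[l(\sigma(u(x) + v(e)), y)\right] \geq l(\sigma(u(x) + \bar v), y)$ with $\bar v := W_e^T \mathbb{E}_e[\phi_e(e)]$, a constant independent of both $x$ and $e$. The right-hand side is exactly the loss of the encoder $\sigma(W_x^T \phi_x(x) + \tilde b)$ with $\tilde b := b + \bar v$, which lies in the same family \eqref{separable_function} but with $W_e = 0$.

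Taking the outer expectation over $(x, y)$ preserves the pointwise inequality and yields $\mathcal{L}(W_x, W_e, b) \geq \mathcal{L}(W_x, 0, \tilde b)$. Thus starting from any admissible triple $(W_x, W_e, b)$ one can construct a triple with $W_e = 0$ of no greater loss, so an optimum of $\mathcal{L}$ is attained with $\phi_e(e)$ nullified, which is precisely the claim.

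The main obstacle is the convexity step: for a fully arbitrary $(\sigma, l)$ the map $u \mapsto l(\sigma(u), y)$ need not be convex, and in that case the matching between $\mathbb{E}_e[\sigma(u + v(e))]$ and a value of the form $\sigma(u + c)$ can fail, so Jensen no longer produces an object inside the same parametric family. A secondary but mild requirement is the existence of $\mathbb{E}_e[\phi_e(e)]$, which is automatic in the DR setting because $e$ ranges over a bounded simulator parameter space with a well-defined randomization distribution $p(e)$. In the regime actually considered by CDR both issues are benign, and the argument above delivers the result.
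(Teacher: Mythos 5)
Your argument takes a genuinely different route from the paper's, and the difference is exactly where your proof becomes conditional. You bound the inner expectation over $e$ by Jensen at the \emph{mean}, which requires $v \mapsto l(\sigma(u+v),y)$ to be convex; as you note yourself, this is an extra hypothesis, and the proposition is stated for an arbitrary activation $\sigma$ and an arbitrary loss $l$ (e.g.\ with a tanh or ReLU readout and squared loss the composite is not convex, and the Jensen inequality can point the wrong way). The paper avoids convexity entirely by bounding the expectation over $e$ by the \emph{minimum} over $e$ rather than evaluating at the mean: it keeps $\mathbb{E}_e$ outermost, defines $g(e) = \mathbb{E}_x\mathbb{E}_{y|x}\left[l(f(x,e),y)\right]$, observes $\mathcal{L}(f) = \mathbb{E}_e\left[g(e)\right] \geq g(e^*)$ for $e^* = \argmin_e g(e)$ (finiteness of $E$ guarantees the minimizer exists), and then absorbs the constant vector $W_e^T\phi_e(e^*)$ into the bias to exhibit a member of the family with $W_e=0$ whose loss equals $g(e^*)$ and is therefore no larger than the original. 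This ``best fixed $e$'' trick asks nothing of $\sigma$ or $l$.

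Note also that the order of expectations matters for the fix: you commuted $\mathbb{E}_e$ to the inside so that Jensen could be applied pointwise in $(x,y)$, but for the min-based argument you must keep $\mathbb{E}_e$ on the outside, so that a \emph{single} $e^*$ is chosen for all $(x,y)$ and $W_e^T\phi_e(e^*)$ really is a constant that fits into the bias term; minimizing over $e$ inside the $(x,y)$-expectation would give a minimizer depending on $(x,y)$ and leave the parametric family. Where your convexity assumption does hold, your version gives a slightly more explicit replacement bias, namely $b + W_e^T\mathbb{E}_e[\phi_e(e)]$ instead of $b + W_e^T\phi_e(e^*)$, but this buys nothing for the proposition as stated. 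To repair your proof in full generality, replace the Jensen step by $\mathbb{E}_e[g(e)] \geq \min_e g(e)$ applied after the inner expectations over $(x,y)$.
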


\begin{proof} 
Our proof assumes that $X$ and $E$ are finite, but can be extended to the infinite setting. Define $g(e)$ as the expectation
\begin{equation*}
    g(e) = \mathbb{E}_x \mathbb{E}_{y|x} \left[l(f(x, e), y)\right].
    \label{g_e}
\end{equation*}
The objective in \eqref{Domain_Invariant_loss} can be restated as: $\mathcal{L}(f) = \mathbb{E}_e g(e)$. We can then define $e^* = \argmin_{e}{g(e)}$ as the optimal environmental variable that minimizes $\mathcal{L}(f)$ and write:

\begin{equation*}
    \mathcal{L}(f) = \mathbb{E}_e g(e) = \sum_e{p(e) g(e)}
\geq \sum_e{p(e) g(e^*)} = g(e^*).
\vspace{-0.5em}
\end{equation*}

For any parameters $W_x$ and $b$, consider the case in which $W_e = 0$. If we define a new parameter $b'$  as $b' = b + W_e^{*T} \phi_e(e^*)$, where $e^*, W_e^* = \argmin_{e, W_e}{g(e)}$, then we can re-write:
\begin{equation*}
    g(e) = \mathbb{E}_{x} \mathbb{E}_{y|x} \left[l(\sigma(W_x^T \phi_x(x) + b'), y)\right] = g(e^*).
\end{equation*}
\noindent It follows that for the set of parameters $\{W_x^{opt}, b^{opt}\}$ that minimize $\mathcal{L}(f)$, the parameters $\{W_x=W_x^{opt}, W_e=0, b=b'\}$ produce the same loss. 
\end{proof}

\vspace{0.5em}
If $f$ learns to produce $\phi_x(o(x,e)){=}\phi_x(x)$ and $\phi_e(o(x,e)) = \phi_e(e)$ in any layer of the neural network, then according to Proposition \ref{proposition_1} the optimal representation will be domain-invariant. While the assumption on separable features is idealistic, modern neural-network architectures are often expressive enough to be capable of separating $x$ and $e$. In such cases, the result above guarantees invariant features. In our experiments, we further demonstrate this result empirically.

\section{CDR Feature Learning from Videos} \label{CDR Videos}
In this section, we describe how to use CDR for learning domain-invariant image representations that capture intuitive physical concepts and generalize to the real world. We assume to have access to a physics simulator that follows the real world physical rules and renders image observations. We consider two experimental paradigms -- controlled and uncontrolled, as we next describe.

\subsection{Uncontrolled environment}\label{ss:uncontrolled}
In the uncontrolled setting we only observe sequences of images $\{o_t\}$ depicting some physical interaction. 

We follow CPC~\cite{oord2018representation} and define $g_\phi$ to be an auto-regressive model (GRU) that summarizes the past observations in the latent space $z_{\leq t}$ and predicts the next $K$ future latent states $\hat{z}_i = \hat{z}_{t+k} = W_k^T GRU(z \leq t)$. we set our positive labels $y'$ to be the true representation at time $t+k$, $z'_i = z'_{t+k} = f_\theta(o(x_{t+k},e'))$, and contrastive labels are drawn from different time-steps or from different videos. we average $K$ predictions and optimize: 

\begin{equation}
\sum_{i=1}^{N} \left[\frac{1}{K} \sum_{k=1}^{K} -\log  \frac{h(\hat{z}_{t+k},z'_{t+k})}{\sum_{j=1}^{N} h(\hat{z}_{t+k},z_j))}\right]. 
\label{CDR_uncontrolled}
\end{equation}

\subsection{Controlled environment}
In the controlled settings, an agent interacts with the environment and we can observe its actions. That is, we observe $o_t$ and an action $a_t$ at each time-step. Here, we define $g_\phi$ to be a 1-step forward model $\hat{z}_i = \hat{z}_{t+1} = g_\phi(z_t, a_t)$. We set our positive labels $y'$ to be the true representation of the next frame $z'_i = z'_{t+1} = f_\theta(o(x_{t+1},e'))$.

\begin{figure}[htbp]
\centerline{\includegraphics[scale=0.12]{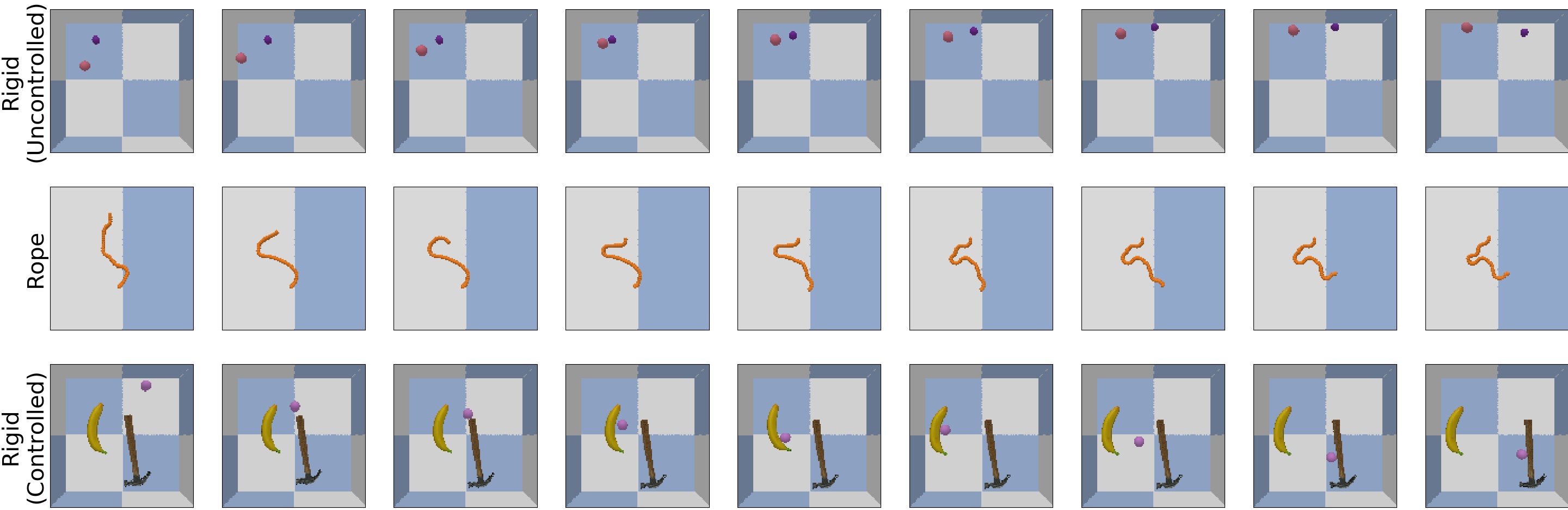}}
\caption{A short trajectory sampled from each simulated environment. Top: 2 objects colliding in a low friction environment. Middle: a Robotic arm manipulating a deformable rope. Bottom: a small cube agent pushing a hammer and a banana.}
\label{Fig:Trajectories}
\vspace{-1em}
\end{figure}

\section{Experimental Results}

In this section, we empirically evaluate  our method on both simulated and real world data. Our evaluations are built to answer the following questions: 

\noindent (1) Does CDR learn to produce both informative and domain-invariant representations? \newline
\noindent (2) Is CDR robust to the simulation-reality gap?

Since our representation learning is unsupervised, it is difficult to define a quantitative performance measure to evaluate the quality of the learned representations. To address this challenge, we devised both simulated experiments that measure the representation quality, and real-robot experiments that evaluate the representations when input to a downstream visual planning task. In simulation, we designed experiments in which certain physical properties, such as the position and orientation of objects, are necessary for making accurate predictions about the future evolution of the system. In such domains, we know that a good representation learning method must represent these features reliably, and we correspondingly design an evaluation protocol termed \textit{information retrieval} to reflect this. For the real-robot evaluation, we evaluate representations that were trained solely in simulation, by using them as input to an MPC-style planner based on image goals~\cite{yan2020learning}. We term this sim-to-real evaluation as \textit{planning}. 
We next describe our domains, training details, and results.

\begin{figure}[htbp]
\centerline{\includegraphics[width=\columnwidth]{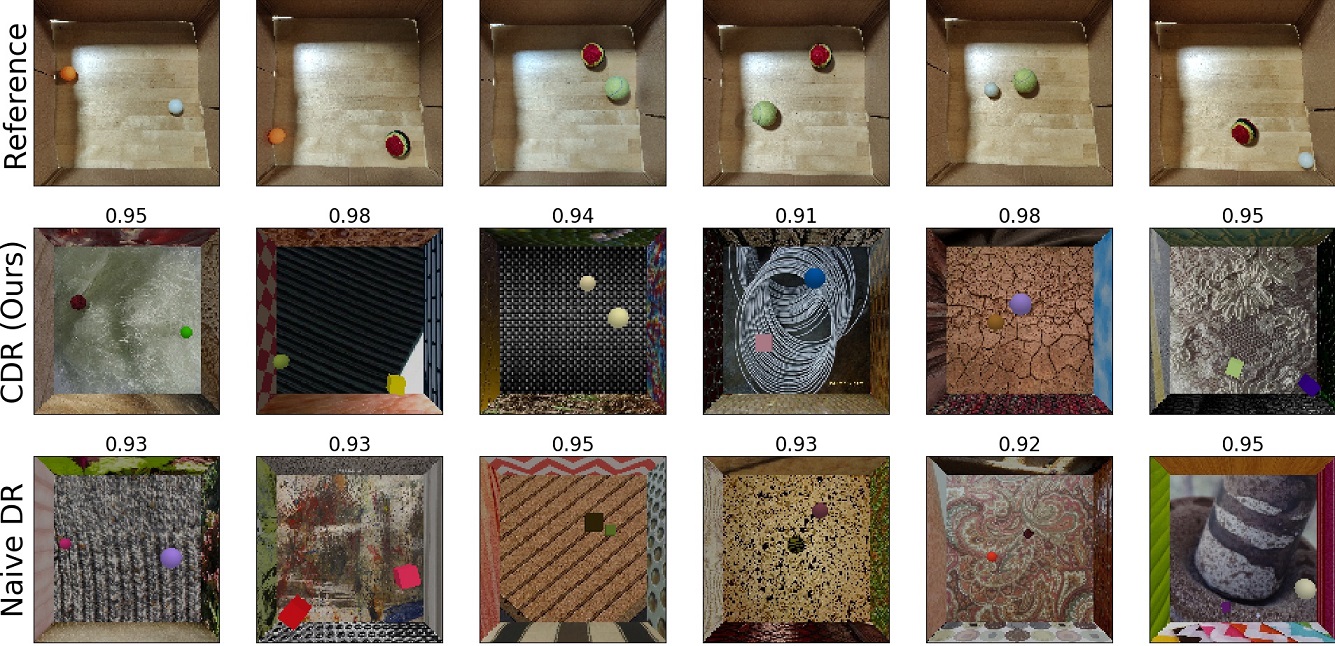}}
\caption{Qualitative comparisons of CDR and naive DR for Uncontrolled Rigid objects environment. We retrieve the nearest neighbour (in latent space) of a real-world OOD reference image from a held-out simulated dataset of 30K images.
we report the Cosine distance between the features of the reference image and the retrieved one. note that CDR latent representations retrieve images that are much more similar to the reference.}
\label{2 rigid objects}
\vspace{-1em}
\end{figure}

\begin{figure*}[htbp]
\centerline{\includegraphics[width=\textwidth]{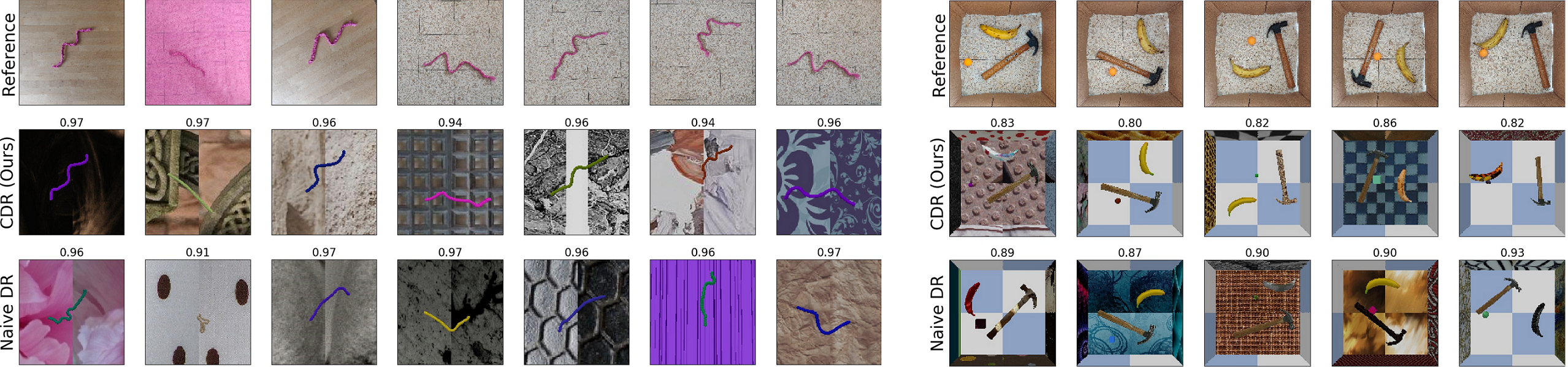}}
\caption{Qualitative comparisons of CDR and naive DR for Controlled environments. Left - Deformable rope, Right - Rigid objects. We retrieve the nearest neighbour (in latent space) of a real-world OOD reference image from a held-out simulated dataset of 30K images. we report the cosine distance between the features of the reference image and the retrieved one. note that CDR latent representations yield images that are much more similar to the reference.}
\label{Action based}
\end{figure*}

\begin{table*}[h]
\caption{Accuracy in Predicting Physical Properties}
\begin{center}
\begin{scriptsize}
\begin{tabular}{|c|c|c|c|c|c|c|c|c|}
\hline
 
 & \textbf{Euclidean} & \textbf{Euclidean (OOD)} & \multicolumn{3}{|c|}{\textbf{IOU}} & \multicolumn{3}{|c|}{\textbf{IOU (OOD)}} \\
 \hline
\textbf{} & \textbf{Rigid} & \textbf{Rigid} &  \textbf{Rigid} &
\textbf{Rigid} & \textbf{Rope} &  \textbf{Rigid} &
\textbf{Rigid} & \textbf{Rope} \\
\textbf{} & \textbf{uncontrolled} & \textbf{uncontrolled} & \textbf{uncontrolled} &
\textbf{controlled} & \textbf{} & \textbf{uncontrolled} &
\textbf{controlled} & \textbf{} \\
\hline
\textbf{Baseline} & $0.45 \pm 0.39$ & $0.55 \pm 0.46$ & $0.18 \pm 0.14$ & $0.19 \pm 0.13$ & $0.15 \pm 0.12$ & $0.15 \pm 0.14$ & $0.15 \pm 0.13$ &  $0.14 \pm 0.11$ \\
\hline
\textbf{CDR (Ours)} & $\textbf{0.3} \pm \textbf{0.36}$ & $\textbf{0.42} \pm \textbf{0.48}$& $\textbf{0.34} \pm \textbf{0.16}$ & $\textbf{0.27} \pm \textbf{0.12}$ & $\textbf{0.24} \pm \textbf{0.15}$  & $\textbf{0.28} \pm \textbf{0.17}$ & $\textbf{0.26} \pm \textbf{0.13}$ & $\textbf{0.22} \pm \textbf{0.13}$ \\
\hline
\end{tabular}
\end{scriptsize}
\end{center}
\label{IOU}
\end{table*}

\begin{table*}[h]
\caption{Invariance to Random Texture} 

\begin{center}
\begin{tabular}{|c|c|c|c|c|c|c|}
\hline
 & \multicolumn{3}{|c|}{\textbf{Cosine similarity (higher is better)}} & \multicolumn{3}{|c|}{\textbf{MSE distance (lower is better)}} \\
\textbf{} & \textbf{Rigid (uncontrolled)} & \textbf{Rigid (controlled)} & \textbf{Rope}  & \textbf{Rigid (uncontrolled)} & \textbf{Rigid (controlled)} & \textbf{Rope}\\
\hline
\textbf{Baseline} & $0.69 \pm 0.22$ & $0.8 \pm 0.19$ & $0.86 \pm 0.12$ & $0.51 \pm 0.44$ & $0.8 \pm 0.67$ & $1.06 \pm 0.89$ \\
\hline
\textbf{CDR (Ours)} & $\textbf{0.86} \pm \textbf{0.21}$ & $\textbf{0.91} \pm \textbf{0.1}$ & $\textbf{0.97} \pm \textbf{0.06}$ & $\textbf{0.13} \pm \textbf{0.2}$ & $\textbf{0.12} \pm \textbf{0.12}$ & $\textbf{0.11} \pm \textbf{0.24}$ \\
\hline
\end{tabular}
\end{center}
\label{domain_invariant_cosine_mse}
\end{table*}

\subsection{Simulated Environments}
We focus on the following environments (see Fig.~\ref{Fig:Trajectories}).
\paragraph{Uncontrolled Rigid objects}
Two simple rigid objects, a cube or a ball, initialized with random size, position and mass, are placed in a blocked frame with low friction. At the start of each episode a force in a random direction and magnitude is applied to one of the objects, making it collide with the frame and the second object over time. 

\paragraph{Controlled Rigid objects}
Two complex shaped rigid objects, a Hammer  and a Banana, initialized with random size, position and orientation, placed in a blocked frame with high friction. An agent, represented by a small cube or sphere initialized with random size, interacts with the environment by randomly pushing the other objects in a random direction. 
We use a two-dimensional action space: the direction and magnitude in Cartesian coordinates of the force applied to the agent at each time-step.

\paragraph{Controlled Deformable Rope}
A rope with random length and thickness, represented in simulation as $n \in \{35... 45\}$ connected small spheres with random radius, is being pushed in a random direction by a robotic arm. The arm is not visible in the overhead images. We use a four-dimensional action space: the 2-d initial and final positions of the robot end effector. At the start of each episode, the rope state is randomly initialized by applying a random number of random actions. \newline

All our experiments use the bullet~\cite{coumans2013bullet} physics simulator with an overhead camera that renders 128×128×3 RGB images as observations. We apply random textures sampled from Describable Textures Dataset (DTD)~\cite{cimpoi2014describing}. 

\begin{figure}[htbp]
\centerline{\includegraphics[width=0.6\linewidth]{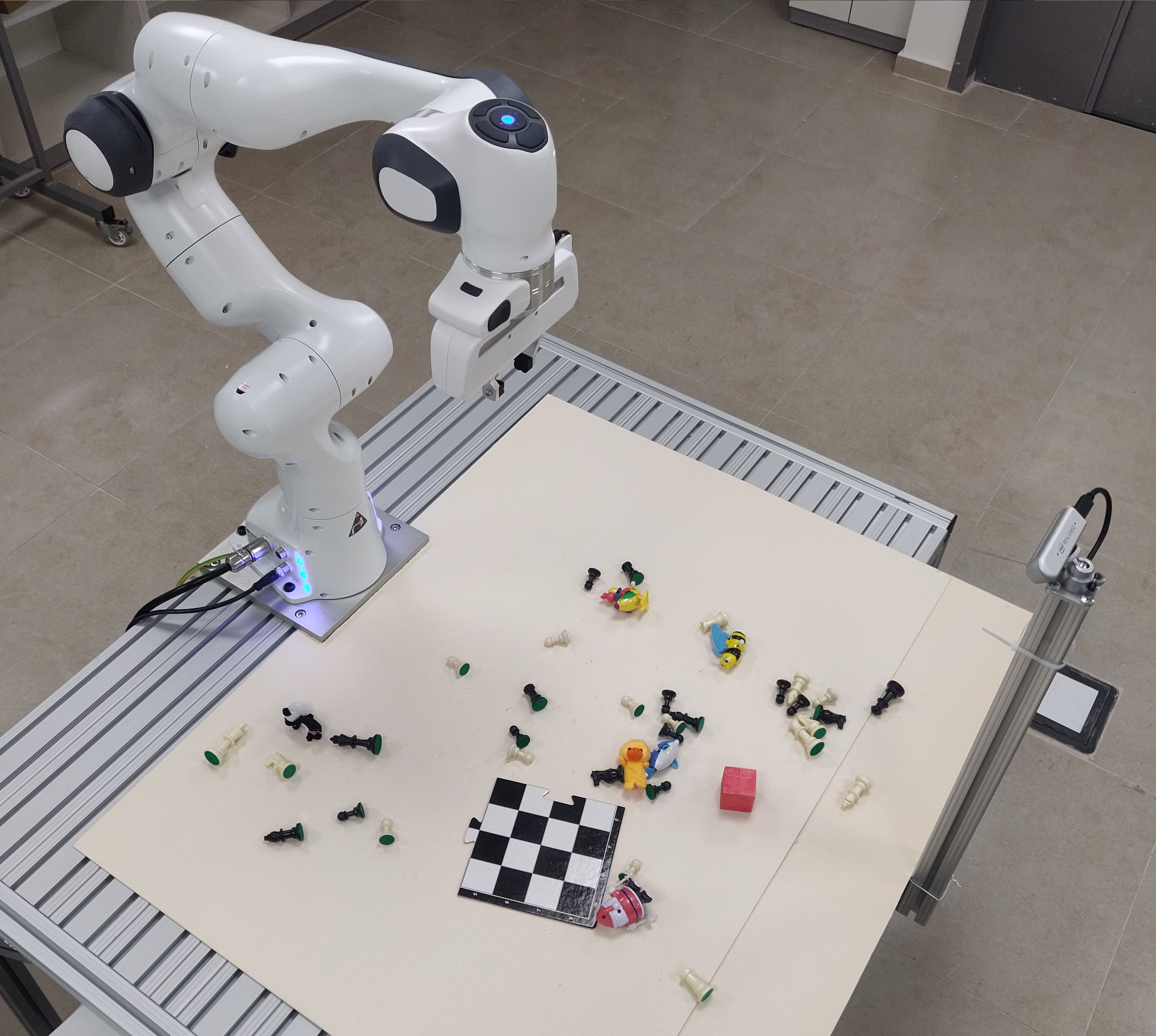}}
\caption{Real Environment, a Panda arm needs to push the red cube toward a goal position represented as an image.}
\label{panda_set_up}
\end{figure}

\subsection{Real Robot Environment}
To evaluate sim-to-real transfer, we use the Franka Emika Panda robotic arm, with an Intel RealSense D415 camera positioned to look diagonally down at the scene (see Fig.~\ref{panda_set_up}). We make sure that the arm is not visible in the images by moving it out of sight before taking an image.
We also use a simulation of the real Panda robotic arm environment to generate training data and for evaluation. Our manipulation task is moving a random sized cube. The actions are similar to the Controlled Deformable Rope environment -- the Panda robotic arm pushes the cube in a random direction that is parallel to the table, specified by a start and goal position for the tip of the gripper.

\begin{figure}[htbp]
\centerline{\includegraphics[width=\linewidth]{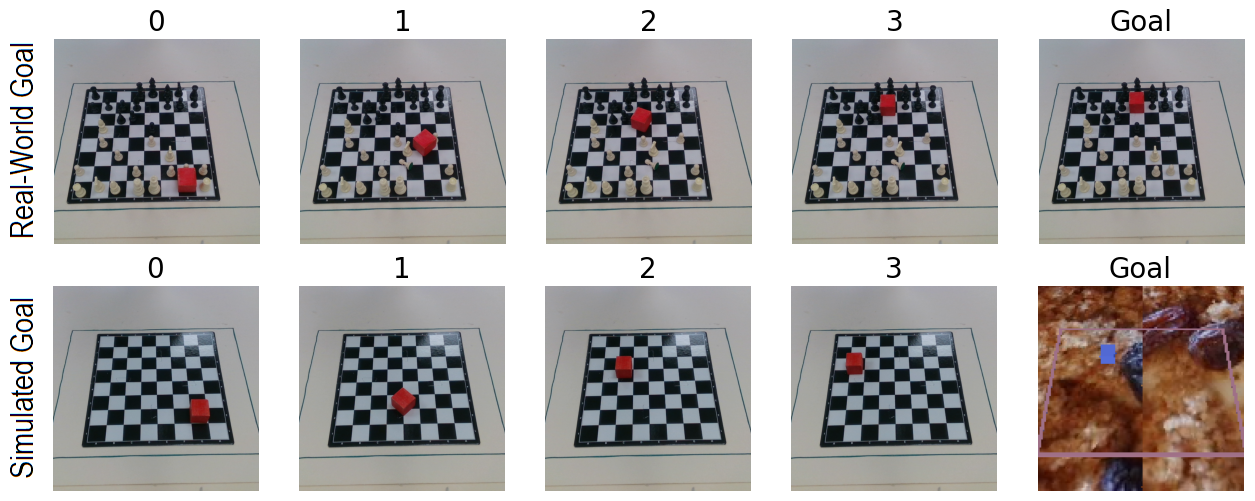}}
\caption{Samples of trajectories using CDR representations. Top - goal image is generated from the real domain. Bottom - goal image is generated in simulation.}
\label{Fig:Planning_golas}
\end{figure}

\begin{figure*}[htbp]
\centerline{\includegraphics[width=\textwidth]{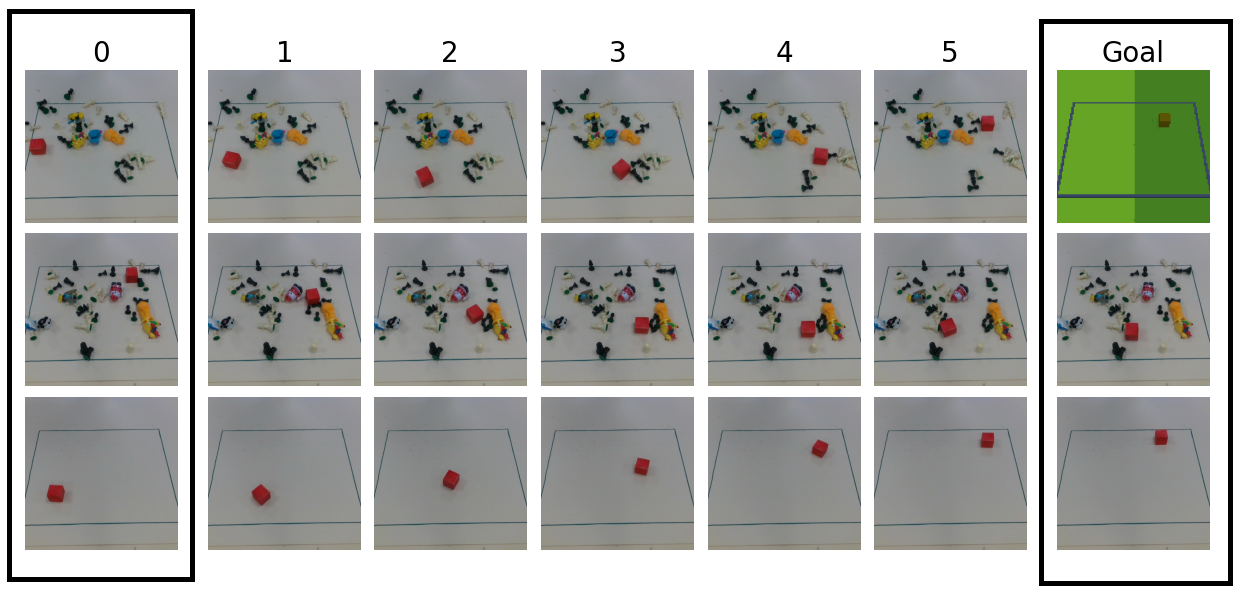}}
\caption{Planning results: each row is a trajectory sampled by planning with CDR representations. In each frame, a 1-step MPC planner selects an action, which is then executed by the Panda arm. Goal images can be generated in simulation, while the various real-world environments are dramatically different from the simulated training environments.}
\label{Fig:Planning_samples}
\end{figure*}

\subsection{Training and Baselines}\label{ssec:training}

\paragraph{Real Robot Experiment}
for fair comparison with the state of the art, we use Contrastive Forward Modeling (CFM)~\cite{yan2020learning} as baseline. Both baseline and CDR uses the same model, a ResNet18 \cite{he2016deep} that was pre-trained on the ImageNet \cite{deng2009imagenet} dataset as a backbone encoder, CFM forward model architecture, and L2 similarity function $h_{i,j} = -||z_i - z_j||^2$. We replace the last fully connected (FC) of the encoder with a FC layer that compresses the encoder outputs to 8-dimensional latent vector $z$.\footnote{Our code is publicly available at https://github.com/carmelrabinov/cdr}
  
We applied DR and generated a set of 4,000 videos with 15 frames each, where each set samples textures independently from DTD. Baseline is trained with the objective presented in Eq.~\ref{NAIVE_CPC_DR} as in CFM, and CDR with the objective presented in Eq.~\ref{CDR_batch}.
We also use 10\% of the training videos as a validation set for early stopping.

\paragraph{Simulated Experiments}
For the Controlled environments (Rigid and Deformable) we use dot-product similarity $h_{i,j} = \exp(z_i^T z_j)$ and replace the CFM forward model architecture with an action encoder that encodes the actions to a 16-dimensional vector with a 2-layered MLP with tanh activations and a hidden layer of size 64 for both Baseline and CDR. the encoding of the state $z_t$ and actions are then concatenated and used as input to a 3-layered MLP with ReLU activations and a skip connection. we found this architecture beneficial for both Baseline and CDR.

For the Uncontrolled Rigid objects environment, we use bilinear similarity and replace the CFM forward model architecture with a GRU that summarize past observations and predict $K$ steps into the future as in CPC, i.e $\hat{z}_{t+k} = W_k^T GRU(z \leq t)$. we use the objective in Eq.~\ref{CDR_uncontrolled} with $K=6$.

We generated a set of 10,000 videos with 30 frames each for both rigid environments, and a set of 8,000 videos with 15 frames each for the deformable rope environment. DTD is divided into 47 categories, we use 3 categories as out-of-distribution (OOD) test textures and sample uniformly from the rest during training.

\subsection{Information Retrieval Results}

Our goal is to learn representations that capture relevant properties of the scene, and ignore spurious features. To evaluate such behavior on out-of-distribution data, we test the ability to retrieve similar observations that share similar physical properties (e.g. position, orientation, size) but from different domains. Given a test image $o$, we first encode it into $z=f_\theta(o)$, and retrieve the latent nearest neighbour from an external dataset $D_{ext}$, unseen during training $z_{NN} = \argmin_{z'\in D_{ext}} h(z, z')$. For $h$, we use the Cosine distance, as we found it to perform well for both CDR and the baseline.
We evaluate test images from both in distribution (following the terminology of Sec.~\ref{ssec:DR}, with $e$ seen in training but different $x$) and OOD (both $e$ and $x$ were not seen in training). To evaluate performance, we measure the Intersection Over Union (IOU) of the object pixels between the nearest neighbor and the test image (higher is better). For the rigid object environments, we also report the sum of Euclidean distances between the objects in the test image and its retrieved nearest neighbour (lower is better). Table~\ref{IOU} shows  quantitative  results of CDR against Baseline. Note that CDR significantly outperforms the baselines by a large margin. 

To test if our representations are in fact domain-invariant, we randomly sample observations from different domains but with the same object configurations $o_1 = o(x,e_1), o_2 = o(x,e_2)$, and compare the distance between them in the latent space, i.e $h(o_1, o_2)$. Table~\ref{domain_invariant_cosine_mse} shows that CDR is significantly more domain-invariant compare to the baseline.

To evaluate sim-to-real transfer, we use real world images that are very different from the simulated data in both domain appearance and physical properties (see Fig.~\ref{2 rigid objects} and Fig.~\ref{Action based}). We then qualitatively evaluate our ability to retrieve similar simulated data. In Fig.~\ref{2 rigid objects} we compare CDR and Baseline to retrieve observations from a challenging reference image with a very different lightning and frame. Though retrieved positions are not always perfect, CDR tends to capture correctly the relative position and sizes of the objects -- an essential property for predicting the outcome of a collision.
For the controlled rigid objects data, we use a hammer and a banana much bigger than the maximal size in simulation. Nevertheless, CDR tends to capture correctly the relationship between the objects and the cube agent.
For the rope data, we use a colored rope, and unlike previous work \cite{yan2020learning}, we use a long rope that can form complex shapes. Furthermore, to demonstrate generalization outside lab conditions, we also use the same rope material as background for some reference images. Note that CDR yields retrievals that are much more accurate than the baseline.

\begin{table*}[h]
\caption{Euclidean distance from final state to goal state in cm. each simulation is given 10 trials to reach a goal. Simulation results are based on 80 experiments, Real robot results with goal image from real world or simulation are based on 16 and 21 experiments, respectively.}

\begin{center}
\begin{tabular}{|c|c|c|c|c|}
\hline
 & \multicolumn{2}{|c|}{\textbf{Simulation}} & \multicolumn{2}{|c|}{\textbf{Real Robot}} \\
\textbf{} & \textbf{In domain goal} & \textbf{Different domain goal} & \textbf{Real world goal} & \textbf{Simulated goal} \\
\hline
\textbf{Baseline} & $ 15.18 \pm 9.01 $ & $ 15.47 \pm 8.20 $ & $15.26 \pm 9.13$ & $10.02 \pm 5.88$ \\
\hline
\textbf{CDR (Ours)} & $\textbf{9.31} \pm \textbf{7.93}$ & $\textbf{7.93} \pm \textbf{7.46}$ & $\textbf{7.59} \pm \textbf{8.04}$ & $\textbf{3.48} \pm \textbf{3.34}$ \\
\hline
\end{tabular}
\end{center}
\label{planning_table}
\end{table*}

\subsection{Planning Results}
To evaluate if representations learned with CDR are useful for planning, we test our model in reaching a goal image from a random state using a simple 1-step Model Predictive Control (MPC) planner. The planner is provided with a goal image $o_{goal}$ which can be either a real-world image or a simulated one (see Fig.~\ref{Fig:Planning_golas}), which is then encoded into $z_{goal} = f_{\theta}(o_{goal})$. In each time step, the planner samples a set $A$ of 1000 possible actions that are guaranteed to push the cube in some random direction and magnitude. We feed these into our forward model, along with the representation of the current image $z_t=f_{\theta}(o_t)$ to get a set of 1000 latent representations of next states, corresponding to the different actions, $\hat{z}_{t+1} = g_{\phi}(z_t, a)$. We greedily choose an action to execute, based on distance in latent space $a_t = \argmin_{a\in A} h(z_{goal}, g_{\phi}(z_t, a))$.

To evaluate sim2real transfer, we use both simulated goal images and real-world goal images, and test the robot on a variety of challenging environments. For example, we add multiple objects to the scene, while no additional objects were seen during training (see Fig.~\ref{Fig:Planning_samples}).  
We measure the Euclidean distance from the final state to the goal state. 
Table~\ref{planning_table} shows that CDR significantly outperforms the CFM Baseline.

\section{Conclusion}
Learning from simulated data is a promising approach to scale up robot learning to complex tasks. In this work, we proposed a principled robustification of unsupervised representation learning using domain randomization, and demonstrated that it can learn relevant representations that are robust to irrelevant features of the domain appearance. Key to our approach is exploiting the fact that we can intervene on a simulated image, and change some of its irrelevant features.

Our framework is general, and can be used with any physical simulator, any domain randomization technique, and any InfoNCE based representation learning method. As such, it is a promising direction for learning general representations for robotic tasks.

\section*{Acknowledgments}
The authors wish to thank Shadi Endrawis for helping in setting up the Panda arm interface. Aviv Tamar is partly funded by the Israel Science Foundation (ISF-759/19)

\bibliographystyle{./bibliography/IEEEtran}
\bibliography{./bibliography/cdr}

\end{document}